\documentclass{svproc}

\usepackage{url}

\usepackage{graphicx}
\usepackage{hyperref}
\usepackage{amsmath}
\usepackage{amsfonts}
\usepackage{amssymb}
\usepackage{xspace}

\def\define#1{{\em #1}}
\def\ie{i.~e.~}
\def\eg{e.~g.~}

\def\abs#1{\left| #1 \right|}
\def\mset#1{\left\{ #1 \right\}}

\def\taxis{\vartheta}
\def\axis{q}

\def\rotx#1{\hbox{R}_x({#1})}

\def\rotz#1{\hbox{R}_z({#1})}
\def\transx#1{\hbox{T}_x({#1})}

\def\transz#1{\hbox{T}_z({#1})}
\def\world{{\sf W}}

\def\transx#1{\hbox{T}_\pvar({#1})}

\def\transz#1{\hbox{T}_z({#1})}

\def\hvec#1#2{{}^{{#1}}#2}
\def\hframe#1#2{{}^{{#1}}_{{#2}}T}

\def\mathword#1{\mathchoice{\hbox{\sf #1}}
					  {\hbox{\sf #1}}
					  {\hbox{\sf\tiny #1}}
					  {\hbox{\sf\tiny #1}}
}
\def\tcp{\ifmmode \mathword{TCP}{} \else \text{TCP}\xspace\fi}
\def\TCP{\ifmmode \mathword{TCP}{} \else \text{TCP}\xspace\fi}
\def\world{{\mathword{W}}}

\def\backward{{\sf backward}}
\def\forward{{\sf forward}}
\def\world{{\sf W}}

\def\kw#1{{\texttt{#1}}}

\def\wcp{\ifmmode \mbox{\sf WCP} \else \mbox{\sf WCP}\xspace\fi}

\def\backward{\ifmmode \mbox{\sf backward} \else \mbox{\sf backward}\xspace\fi}
\def\forward{\ifmmode \mbox{\sf forward} \else \mbox{\sf forward}\xspace\fi}
\def\status{{s}}
\def\Status{{\cal S}}
\def\status{{c}}
\def\Status{{\cal C}}
\def\fStatus{C}

\def\mstar{m^\star}
\def\sstar{{\status^\star}}
\def\statusstar{\sstar}
\def\wstar{w^\star}
\def\xstar{\pvar^\star}

\def\unionset{\cup}

\def\axis{q}
\def\frames{{\cal F}}
\def\R{{\mathbb{R}}}

\def\refe#1{(\ref{#1})}
\def\st{\text{subject to }}

\begin{document}
\mainmatter              
\title{Optimization of Cartesian Tasks with Configuration Selection}
\titlerunning{Optimization with Configuration Selection}  
%
\author{Martin G.~Wei\ss}
\authorrunning{Martin G.~Wei\ss} 
%
\tocauthor{Martin G.~Wei\ss}
\institute{Ostbayerische Technische Hochschule Regensburg, Germany\\
\email{martin.weiss@oth-regensburg.de}}

\maketitle              

\begin{abstract}
A basic task in the design of an industrial robot application is the relative placement
of robot and workpiece. Process points are defined in Cartesian coordinates relative
to the workpiece coordinate system, and the workpiece has to be located such that the robot
can reach all points. Finding such a location is still an iterative procedure based on the 
developers' intuition. One difficulty is the choice of one of the several solutions of the 
backward transform of a typical 6R robot. 
We present a novel algorithm that simultaneously optimizes the workpiece location and the 
robot configuration
at all process points using higher order optimization algorithms. A key ingredient is 
the extension of the robot with a virtual prismatic axis.
The practical feasibility of the approach is shown with an example 
using a commercial industrial robot.

\keywords{configuration, virtual axis, differentiable optimization}
\end{abstract}

\def\pvar{{x}}
\def\SE#1{\mbox{SE}( #1 )}

\section{Problem Statement}

When programming an industrial robot application the typical workflow starts with the workpiece that has
to be processed. The process points like welding points, drilling holes, points describing a glueing 
or laser contour are defined. For handling applications also points are defined, not relative to 
the transported workpiece, but to devices holding and 
transporting the items. The points $P^k$ are actually frames in $\SE 3$
relative to the workpiece frame $F$. 
Then the application developer chooses a robot, and the workpiece location
relative to the robot.

The difficulty is as follows: A typical industrial robot 
with up to 8 discrete solutions, one has to be chosen individually for each process point according to some criterion, solvability being the first. 
We call these solutions in axis space \define{configurations} $\status\in \Status$, with $\Status$ a finite set coding the possible configurations
like $\Status = \mset{0, 1, \ldots, 7}$.  
In industrial robot programming languages frame data are
extended by a code for the configuration selection. E.g. the KRL language 
\cite{KSS83SI} uses an integer named 
status, abbreviated \kw{S}, whose bits 0,1 and 2 code sign choices in the backward transform. 
A point-to-point command then looks like 
\begin{verbatim}
       PTP {X 100, Y 200, Z 300, A 40, B 50, C 60, S 'B101'}
\end{verbatim}
with components \kw{A}, \kw{B}, \kw{C} as Euler-like angles $\alpha, \beta, \gamma$. Other robot languages use different terms and syntax but the underlying mathematics is the same. 

The configuration information is not a part of the Cartesian data so actually an additional degree of freedom 
comes from the backward transform. 
But it is not intuitive for humans which solution falls into axis limits due to mechanical design, which
may have to be restricted further according to the cell setup or cabling of the robot. 
So not all of the 8 solutions may reachable, even in an asymmetric way
as axis ranges are not symmetric around $0^o$. 
So making all points reachable for the robot is still based on human intuition and trial-and-error.
It would be desirable to have an algorithm which, given  process frames 

${P^1}, \ldots,  {P^K}$ 
expressed in a frame $F$, 
 and a robot description, determines a reachable workpiece position $F$ in world $\world$
{\em and} suitable configurations for all $P^k$.

In \cite{Wei.2019} we have presented an algorithm which solves the task described so far with an 
extension of the 6R robot to a RRRPRRR robot with a virtual prismatic axis that 
makes all of $\SE 3$ reachable, if we drop axis restrictions in addition. The virtual axis measures 
non-reachability from the original robot perspective. If a location $F$ is found with the
virtual axis set to 0 for all positions, and all axes are inside their ranges, then 
the problem is also solved for original robot. However this algorithm only works 
with the restrictive assumption that all $P^k$ are approached with the same configuration $\status\in\Status$.  
In this paper we extend the approach to different configurations $\status_k$ for each $P^k$, still 
determined by efficient algorithms from differentiable optimization.
Approaches with virtual axes are also used e.g. in \cite{GeuFlores.2019,Leger.2016,Pellegrino.2020} but 
the configuration is explicitly held fixed in these papers. Automatic 
selection of the robot configuration is new to the best of the author's knowledge. 

The paper is organized as follows: In Section 2 we introduce notation and 
briefly present the virtual axis approach. 
Section 3 explains the formulation of the optimization problem with the discrete 
configuration set suitable for solvers using derivatives. 
Numerical results with data from a industrial 
robot are shown in Section 4 before we summarize and conclude with directions for further research.

\section{Virtual Axis Approach}

\begin{figure}
\hfill
\includegraphics[width=0.55\textwidth]{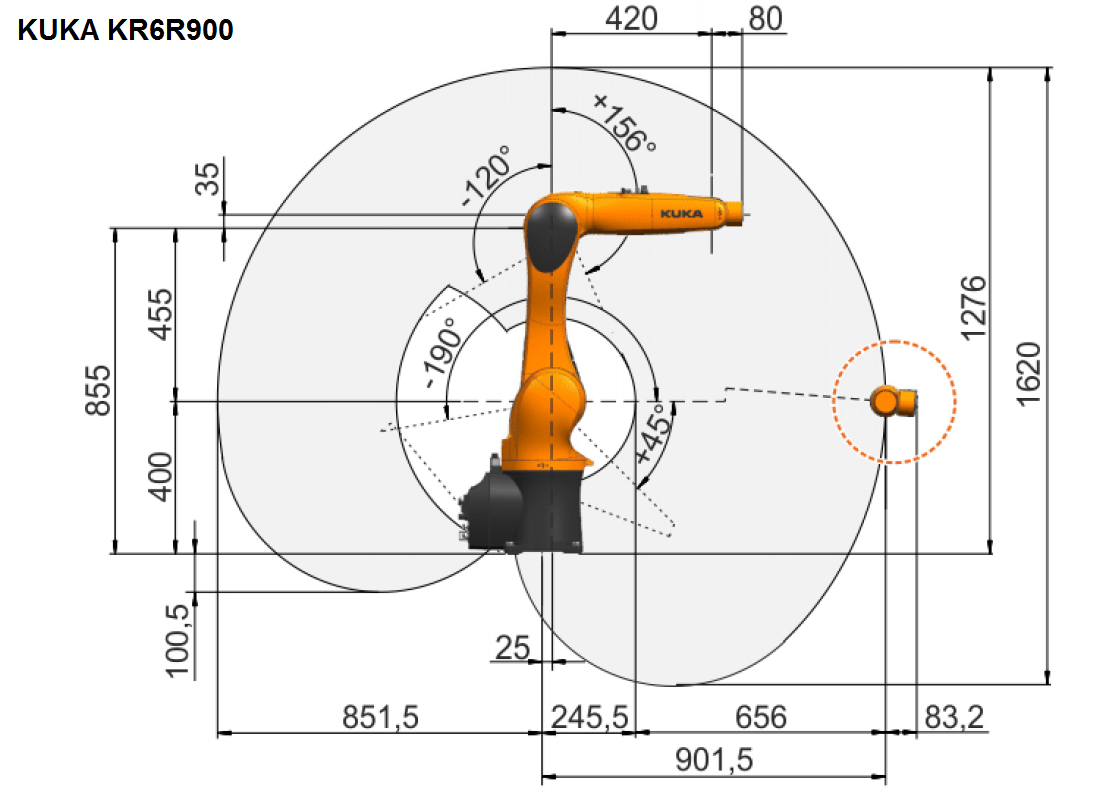}
\hfill
\includegraphics[width=0.42\textwidth]{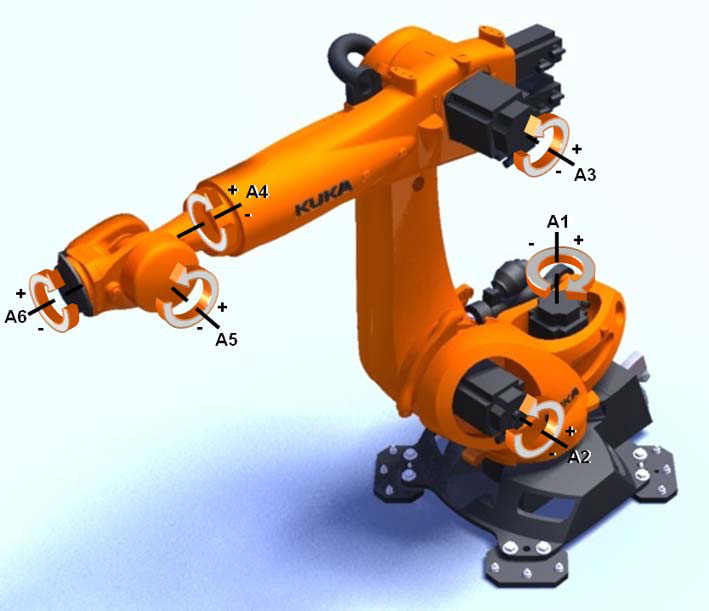}
\hfill

  \caption{KUKA KR6R900 dimensions and sense of axis rotation}
  \label{fig:KR6R900construction}
\end{figure}

We consider a 6R robot with spherical wrist and 
kinematic structure typical of many industrial robots. The robot is modelled in the 
Denavit-Hartenberg convention of the Robotics Toolbox described in \cite{Corke.2011}: The frame relating the coordinate systems of axes $i$ and $i+1$ is 
\[
	\hframe {i}{i+1}(\vartheta_i, d_i, a_i, \alpha_i,\varphi_i) = 
	 \rotz{\vartheta_i+\varphi_i}\transz{d_i}\transx{a_i}\rotx{\alpha_i} 
\]
with the usual abbreviations for translations and rotations. In addition to the familiar 
$\vartheta$, $d$, $a$, $\alpha$ the offset $\varphi$ gives an additional degree of freedom to assign
the axis zero positions as the mechanical engineers prefer.

We use a KUKA KR6R900 robot \cite{KSS83SI} industrial robot with 6 kg payload
and 900 mm reach. 
A construction drawing with the robot in its home 
position $(0, -\frac \pi 2, \frac \pi 2, 0, 0, 0)$ as well as the axis sense of rotation are
shown in Figure \ref{fig:KR6R900construction}, 
parameters 
and axis limits in Table \ref{fig:DHparametersOriginal} derived from \cite{AgilusSpecification.2013}.
Note that axis 1 is pointing downward in the
manufacturers definition, so the kinematic chain starts with an additional $\rotx \pi$.

This robot is mapped to a virtual robot with an additional prismatic joint with variable $v=d_4$ 
between the original axes 3 and 4, shown in the same Table \ref{fig:DHparametersOriginal} with 
tildes over the $\tilde \taxis$ variables. 
We distinguish between the variables of the two robots with indices $i$ and $j$,  
$\taxis = (\taxis_1, \ldots, \taxis_6)$ and 
$\axis = (\tilde\taxis_1,\tilde\taxis_2,\tilde\taxis_3,v,\tilde\taxis_4,\tilde\taxis_5, \tilde\taxis_6)$
respectively.
Ignoring axis limits and singularites, the backward transform gives up to 8 discrete solutions indexed by 
$\status\in\Status = \mset{0, 1, \ldots, 7}$. 
The virtual robot axis ranges are $\tilde\taxis_j \in (-\pi, \pi]$ for all rotational axes, and 
$v\in(-\infty, +\infty)$ for the virtual axis. 

\begin{table}
\hfill
 \begin{tabular}{c|c||c|c|c|c|c|c|c|c|c}
       \ $i$ \ & \ $j$ \  & \ $\taxis_i$ \ & \ $\axis_j$ \ & $d_i=d_j$  &  $a_i=a_j$     & $\alpha_i=\alpha_j$ & $\varphi_i=\varphi_j$ &  \ type\ & \hspace{0.3em} $\taxis_{\min,i}$ \hspace{0.3em} & \hspace{0.3em} $\taxis_{\max,i}$ \hspace{0.3em}
       \\[0.3ex] 
	\hline & & & & & & & & & \\[-1.9ex] 
         1   & 1 & $\taxis_1$ & $\tilde\taxis_1$ &   -400     &   25     &  $\frac \pi 2$ &      0         & R
         	& $-170^o$ & $170^o$\\
         2   & 2 & $\taxis_2$ & $\tilde\taxis_2$ &     0      &  455     &      0         &      0         & R
         	& $-190^o$ & $45^o$\\
         3   & 3 & $\taxis_3$ & $\tilde\taxis_3$ &     0      &   35     &  $\frac \pi 2$ & $-\frac \pi 2$ & R
         	& $-120^o$ & $156^o$\\
             & 4 &    0       &      $v$         &     0      &    0     &      0         &      0         & P 
         	& & \\
         4   & 5 & $\taxis_4$ & $\tilde\taxis_4$ &   -420     &    0     & $-\frac \pi 2$ &      0         & R
         	& $-185^o$ & $185^o$\\
         5   & 6 & $\taxis_5$ & $\tilde\taxis_5$ &     0      &    0     &  $\frac \pi 2$ &      0         & R
         	& $-120^o$ & $120^o$ \\
         6   & 7 & $\taxis_6$ & $\tilde\taxis_6$ &   -80      &    0     &    $\pi$       &      0         & R
            & $-350^o$ & $350^o$
  \end{tabular}
  \hfill\strut
  \caption{Denavit-Hartenberg parameters and axis limits in degrees}
  \label{fig:DHparametersOriginal}
\end{table}

We denote by
$\forward: (-\pi, \pi]^6  \to \frames \times \Status$, $\taxis \mapsto (P, \status)$
the forward transform of the original robot,
$\frames\subset \R^{4\times 4}$ denoting the set of frames, 
\ie $\SE 3$ in matrix representation. \forward  returns the tool frame in world coordinates and 
also the configuration, hereby making $\forward$ 
injective with an inverse $\backward$ on the set of nonsingular axis positions. The configuration
is a function $\status = \fStatus(\taxis)$; the formulas are not explicitly presented here: 
Bit 0 of $\status$ 
is set if the wrist centre point is behind axis 1. Bit 1 is set if the wrist centre point is 
below the line connecting axes 2 and 3. Bit 2 is set if axis 5 is directed upward. 
These bits are not simply signs of axis angles because there are offsets between axis 1 and 2 as well as
between axis 3 and 4, so the wrist centre point is not above axis 1 or on the line from axis 2 to 3 
if the robot is in an upright position. 
We overload the meaning of $\forward$ and also write $\forward(\axis)$ for the 7 axis virtual robot.
Analogously, $\backward$ denotes the backward transform in world coordinates
\begin{eqnarray*}
 	\backward: \frames \times \Status & \to & \prod_{i=1}^6 [\taxis_{\min,i}, \taxis_{\max,i}] \unionset \mset{\infty}
\\
	(P,\status) & \mapsto& \taxis \qquad \text{ with } \forward(\taxis) = P \text{ and } \fStatus(\taxis) = \status
\end{eqnarray*}
The special value $\infty$ in the range is used to signal unreachable points when 
the wrist centre point is outside the working space, or a solution with the given configuration exists, 
but outside the axis range.

The backward transform of the virtual robot is identical
to the standard 6R backward transform, if the TCP frame is reachable for the 6R robot. 
If not, the virtual axis is elongated to the minimum length (in absolute value) necesssary for the wrist centre
point to make the target point reachable; such a solution always exists for our robot class. This gives a 
well-defined function (for details see \cite{Wei.2019}, including a smoothing operation at the workspace boundary).
The key idea to replace the error signal $\infty$ in the non-solvable case by the quantity $\abs{v}$ of the virtual robot to measure non-solvability of the original backward transform. 

Our implementation can handle the class of 6R robots with the $\alpha_i$ values exactly as given
in Table \ref{fig:DHparametersOriginal}, 
and all the $d_i$, $a_i$ possibly non-zero as in Table \ref{fig:DHparametersOriginal}. 
The virtual axis could also be inserted between axes 2 and 3 with no effect on the rest of the analysis.
Other locations would not work -- \eg before axis 1 or after axis 6 --  or change the kinematic 
structure, \eg between axes of the central wrist. 
The approach carries over to any robot class with analytic solution and a similar virtual axis extension.

\section{Optimization with Configuration Selection}

Given a workpiece frame $F$ it is easy to check whether all process frames $P^k$ are reachable: 
Evaluate the backward transform for each $P^k$ with all possible configurations $\status$, 
$\taxis^k_{\status} = \backward(\hframe \world F\, {P^k}, \status)$, where $\hframe \world F\, {P^k}$ denotes
the change of coordinates from $F$ to $\world$. If at least one solution exists for all $P^k$, 
choose any of these solutions. If however at least one $k$ exists such that 
$\backward(\hframe \world F\, {P^k}, \status)=\infty$ signals an unreachable frame for all configurations
$\status$, we have no mathematical clue how to change $F$. This clue now comes from the virtual axis value $v$. 

We explain the algorithm in three steps: First we reformulate the reachability check for a single 
$P$ as a minimization problem, without axis restrictions. Then axis ranges are included. Finally 
we consider several $P^k$ and variations. 

\paragraph{minimin problem}
Given a single TCP frame $P$, reachability for workpiece location $F$ 
can be expressed for the original robot as follows: 
There exists $\status\in\Status$ such that $\backward(\hframe \world F P, c) \neq \infty$. For the 
virtual robot this is equivalent to: There exists $\status\in\Status$ such that
 $v(\axis)=0$ where $\axis = \backward(\hvec \world F\, P, c)$, and $v(\axis)$
is the projection onto the $v$-component. 
Abbreviating $f_\status(P) = (v(\backward(\hframe W F\, P, c))^2$, this can be rephrased as as a minimization
problem as in \cite{Wei.2019}: If $\min_{\status\in\Status} f_\status(P)$ has minimum value 0, we have
found a reachable solution. 
The square gives differentiability but can  
be replaced by the absolute value or any other distance function. The nondifferentiability of the absolute
value can also be handled in the optimization problem, see Section 4.

However, we have to optimize over $F$ giving $\min_F \min_{\status\in\Status} f_\status(P)$
(Note that $\min_{\status\in\Status} \min_F f_\status(P)$ is not the problem we are interested in: 
this means different positions $F$ for each configuration). This 
is a minimin-optimization problem considered difficult in literature, because 
the minimum over a finite number of differentiable functions is not differentiable but only 
continuous. This seems to exclude optimizers using derivatives that wen want to employ for efficiency.

So we have to reformulate the problem. For unconstrained minimax problems like the 
minimization of the $\infty$-norm there exist standard transformations to smooth, even linear,
but constrained formulations, see textbooks like \cite{Luenberger.2016}.  
For minimin, \cite[Exercise 12.6]{Nocedal.2006} leaves the problem unanswered, hinting that no differentiable 
formulation exists.
\cite[Chapter 8]{Eiselt.2007}  
suggests a sequence of linear problems, one for each $\status$, but only for a linear original problem. 
A key developer of the Gurobi optimization package 
suggests mixed nonlinear-integer programming \cite{GregGlockner.2016}, with a bit $b_\status \in\mset{0,1}$
for each $\status$ encoding with $b_\status=0$ whether $f_c$ attains the minimum, and an additional constraint 
$\sum_{\status\in\Status} b_\status = \abs{\Status}-1$ enforcing exactly one minimizing function. This excludes
a variable number of functions $f_\status$ reaching the minimum simultaneaously, and requires specialized software.

We propose a different transformation to a smooth constrained problem -- as smooth as the $f_\status$ -- 
with a convex combination which seems to be
new. We state the lemma with $x$ as the standard optimization variable, in our application
any parametrization of the workpiece frame $F=F(x)$ with $F = F(x,y,z,\alpha,\beta,\gamma)$.

\def\lvar{x}  
\def\lvarstar{x^\star}
 
\begin{lemma}
Consider $f_\status: \R^n\to \R$ for $\status \in \Status$, $\Status$ a finite set. 
Consider the unconstrained minimization problem
\begin{equation}
 \label{prob:miniminUnconstr}
\min_{\lvar\in\R^n} \min_{\status\in\Status} f_\status(\lvar)
\end{equation}
and the constrained problem
\begin{align}
\label{prob:miniminConstr}
	\min_{\lvar\in\R^n,w\in\R^{\abs{\Status}}} & \sum_{\status\in\Status} w_\status f_\status(\lvar)  
	\\
	\st & \sum_{\status\in\Status} w_\status = 1, \qquad
	0 \leq w_\status\leq 1 \qquad \text{ for all } \status\in\Status
	\nonumber
\end{align}
Then the problems are equivalent: \refe{prob:miniminUnconstr} is unbounded iff  \refe{prob:miniminConstr} is.
If the problems are bounded, then the minimum and infimum values are the same, $\mstar=f_\statusstar(\lvarstar)$ 
 for some $\lvarstar, \statusstar$ if the minimum is attained. 
\end{lemma}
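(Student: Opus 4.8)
The plan is to reduce the constrained problem \refe{prob:miniminConstr} to the minimin problem \refe{prob:miniminUnconstr} by carrying out the inner minimization over the weight vector $w$ analytically, for each fixed $\lvar$. The starting observation is that, with $\lvar$ held fixed, the objective $\sum_{\status\in\Status} w_\status f_\status(\lvar)$ is affine in $w$, and the feasible set $\mset{w\in\R^{\abs{\Status}} : \sum_\status w_\status = 1,\ 0\le w_\status\le 1}$ is exactly the probability simplex. Because the $w_\status$ are nonnegative and sum to one, any feasible $w$ produces a convex combination, so $\sum_\status w_\status f_\status(\lvar) \ge \min_{\status\in\Status} f_\status(\lvar)$, and this lower bound is attained by placing all weight on a minimizing index. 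Hence for every fixed $\lvar$ the inner minimum equals $\min_{\status\in\Status} f_\status(\lvar)$, the minimum over $\status$ existing since $\Status$ is finite.

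Next I would substitute this identity and interchange the order of minimization, writing the constrained problem as $\min_\lvar \bigl(\min_{w} \sum_\status w_\status f_\status(\lvar)\bigr) = \min_\lvar \min_{\status\in\Status} f_\status(\lvar)$. Because the two inner value functions of $\lvar$ coincide pointwise, their infima over $\R^n$ coincide as well. This single identity settles every clause at once: the two problems are unbounded below together, both equalling $-\infty$ precisely when the common function $\lvar\mapsto\min_\status f_\status(\lvar)$ is unbounded below, and when bounded they share the same infimum value $\mstar$.

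For the attainment claim I would argue both directions. If the minimin value is attained at some $(\lvarstar,\statusstar)$, then $(\lvarstar,w)$ with $w$ the unit vector supported on $\statusstar$ is feasible for \refe{prob:miniminConstr} and realizes the value $f_{\statusstar}(\lvarstar)=\mstar$, so the constrained minimum is attained. Conversely, if $(\lvarstar,\wstar)$ attains the constrained minimum, then by the convex-combination bound above there is an index $\statusstar$ with $f_{\statusstar}(\lvarstar)\le \sum_\status \wstar_\status f_\status(\lvarstar)=\mstar$, and since $\mstar$ is also the infimum of the minimin problem this forces $f_{\statusstar}(\lvarstar)=\mstar$, i.e.\ $(\lvarstar,\statusstar)$ minimizes \refe{prob:miniminUnconstr}.

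There is no deep obstacle here; the argument is essentially the one-line convex-combination inequality together with its tightness. The only points requiring care are bookkeeping ones: handling the cases where no minimizer exists — so that only infima, not minima, can be compared — consistently with the unboundedness dichotomy, and noting that the minimizing $w$ need not be unique, or even a vertex of the simplex, when several $f_\status$ tie at the minimum. This is exactly why the statement only asserts the existence of some $\statusstar$ rather than a distinguished one.
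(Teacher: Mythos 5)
Your proof is correct, and it rests on the same key inequality as the paper's --- a convex combination of the $f_\status(\lvar)$ is bounded below by $\min_{\status\in\Status} f_\status(\lvar)$, with equality when all weight sits on a minimizing index --- but you organize the argument around a genuinely different skeleton. You first carry out the partial minimization over $w$ analytically (a linear objective over the probability simplex attains its minimum at a vertex), obtaining the pointwise identity of the two value functions in $\lvar$, and then let the interchange of iterated infima deliver the equality of optimal values and the unboundedness dichotomy in one stroke, treating attainment afterwards by a sandwich argument. The paper instead works directly at the level of minimizers, in two directions: it verifies that the vertex $\wstar$ built from a minimin solution is optimal for \refe{prob:miniminConstr} (the same convex-combination bound), and for the converse it uses a weight-shifting argument --- if several $\wstar_{\status_r}$ are positive, the corresponding $f_{\status_r}(\lvarstar)$ must all be tied, else moving weight toward the smaller value would decrease the objective --- before collapsing $\wstar$ to a vertex. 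Your route is cleaner on the boundary cases: the paper dismisses the infimum and unbounded cases with \equote{the arguments are similar}, whereas your value-function identity covers them rigorously and uniformly. What the paper's route buys is an explicit structural fact that your proof leaves implicit: at a constrained optimum every configuration $\status$ with $\wstar_\status>0$ is tied at the minimum value, and the paper exploits precisely this later, when it remarks that any configuration with $w_\status>0$ may be chosen in the application. Your argument yields this too --- a positive weight on a non-minimal $f_\status(\lvarstar)$ would push the convex combination strictly above $\mstar$ --- but you would need to state it explicitly to support that downstream use.
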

\begin{proof}
The constraint $\sum_{\status\in\Status} w_\status = 1$ enforces that 
at least one function attains its minimum, if a minimum exists.

Assume a finite minimum 
$\mstar = f_\sstar(\lvarstar)$ for some $\sstar\in\Status$ for \refe{prob:miniminUnconstr}. 
We show that $\lvarstar$ and 
$\wstar$ with $\wstar_\sstar=1$ and $\wstar_\status=0$ for all $\status\neq \sstar$ is optimal for
 \refe{prob:miniminConstr}.
Clearly $\xstar, \wstar$ are admissible with objective function value $\mstar$. 
Choose any $x, w$ admissible.
The $w_\status$ are nonnegative and sum to 1, so we get
\[
	m := \sum_{\status\in\Status}  w_\status f_\status( x) 
	\geq 
	\sum_{\status\in\Status}  w_\status \left(\min_{\status\in\Status} f_\status( x) \right)
	= 
	\sum_{\status\in\Status}  w_\status  \mstar
	= \mstar
\]  
Therefore $\lvar,w$ cannot attain a smaller objective value. 

In the other direction, assume a finite minimum $\mstar$ for \refe{prob:miniminConstr} at $\lvarstar$, $\wstar$.
If $\wstar_{\status_r}>0$ for several $\status_r$, then all corresponding $f_{\status_r}(\xstar)$ must take the same
value - otherwise we could choose the minimum over $r$, increase the corresponding weight $\wstar_\sstar$, 
adjust the convex combination and so reduce the objective value. But with identical function values 
we can move the weights in $\wstar$ to a singleton with $\wstar_\sstar = 1$, $\wstar_\status=0$ otherwise.
As before $\xstar$ is the minimum point for \refe{prob:miniminUnconstr}.
The arguments for infimum and unboundedness are similar.
\end{proof}

\paragraph{Axis range restrictions}
Now we can include axis constraints into the smooth reformulation of the minimin problem. 
We sill consider
a single process frame $P\in\frames$. We identify $f_\status(x) = v_\status^2$ where 
$v_\status$ is the virtual axis value from
$(\tilde\taxis_{\status,1},\tilde\taxis_{\status,2},\tilde\taxis_{\status,3},v_\status,
\tilde\taxis_{\status,4},\tilde\taxis_{\status,5}, \tilde\taxis_{\status,6}) = 
\backward(\hframe \world {F(x)}\, P,\status)$, assuming no constraints on the axes, and now with $F$ depending
on $x$. 
If $\min_{\status\in \Status} f_\status(x) = 0$, then $\hframe \world {F(x)} \, P$ is reachable
for at least one configuration.

However the axis constraints are special: 
We need $\taxis_{\min,i} \leq \tilde\taxis_{\status,i} \leq \taxis_{\max,i}$ only for those configurations
with $v_\status = 0$: If $P$ needs an elongated virtual axis, then $P$ is unreachable anyway for
the original robot anyway, no matter whether the original axes restrictions are fulfilled. 
We model this with a slack variable $m_\status$ for the violation of the axis restrictions
for configuration $\status$, which is also included in the objective function (we suppress the
dependence of $\tilde\taxis$ on $x$):
\begin{align}
\label{prob:miniminP}
	\min_{\lvar\in\R^6,w\in\R^{\abs{\Status}}} & 
	\sum_{\status\in\Status} w_\status (f_\status(x) + m_\status)
	\nonumber \\
	\st & \sum_{\status\in\Status} w_\status = 1,  \quad 0 \leq w_\status\leq 1 
		& & \qquad \text{ for all }  \status\in\Status 
	\nonumber\\
	& \taxis_{\min,i} - \tilde\taxis_{\status,i}  \leq m_\status
	   & & \qquad \text{ for all }  \status \in \Status, i\in\mset{1, \ldots, 6}
	\\
	& \tilde\taxis_{\status,i} -  \taxis_{\max,i} \leq m_\status 
	   &  & \qquad \text{ for all }  \status \in \Status, i\in\mset{1, \ldots, 6}
	\nonumber \\
	&m_\status  \geq 0 & & \qquad \text{ for all }  \status \in \Status \nonumber
\end{align}
If all axis restrictions can be met with $m_\status=0$ we have found a solution for the original robot.
With the same argument as in the minimin lemma an objective value 0 signals reachability for at $P$
with at least one configurations. The measures $f_\status$ for violation of workspace and $w_\status$
for violation of axis range are nonnegative, yielding 0 for a reachable pose for the original  robot. 
Summing this up, we have replaced the test whether $\hframe \world {F(x)} \, P$ is 
reachable by a optimization problem to find $x$ and $F(x)$ respectively. 
Individual slack variables $m_{\min,\status,i}$ and $m_{\max,\status,i}$ for all axis constraints 
would also do but increase the number of variables.

If several $f_\sstar$ reach the same minimum value, then the proof of the Lemma shows that 
any configuration $\status$ with $w_\status > 0$ can be chosen in the application, this is still a degree 
of freedom. 

\paragraph{Formulation with frame list and variations}

For the original problem with several frames $P^1, \ldots, P^K$ we simply add indices $k$ to the
variables $w^k_\status$, $m^k_\status$, $v^k_\status$, 
$\tilde \taxis^k_{\status,i}$ of \refe{prob:miniminP} 
(but not $x$, which describes the single workpiece) and sum over all frames in the objective function 
$ \sum_{k=1}^K \sum_{\status\in\Status} w^k_\status ((v^k_\status)^2  + m^k_\status) $. 

Note that then we still only have one slack $m_\status^k$ 
variable for each configuration at each $P^k$, measuring axis range violations. This is the 
correct formulation for path point-to-point processes like handling. For path processes with blending 
contours a change of configuration must not occur on a path enclosed by two stop points 
because this would mean that a singularity is
crossed. However this requirement can be modelled easily: We use one common slack variable 
$\bar m_\status$ for all $P^k$ on the same path segment, but different variables
for different segments.

User constraints on $F$, like a rectancle of possible positions on a table, can be expressed in $x$ with easily.

\section{Numerical Results}

We have implemented the optimization problem in MATLAB with the SQP solver of
the Optimization Toolbox. For problems around $K=30$ a solution is found in typically 
less than 5 minutes on a standard laptop with i7 processor, with parallel evaluation
of the computations for the $P^k$. Figure \ref{fig:Optimization} shows a typical initial and optimized
positions. The virtual axis is plotted in red, so the robot is reaching to an unreachable point
in the initial setup. In the optimized solution, $v=0$ makes the virtual axis invisible.

We have also compared minimizing the absolute value 
$\abs{v}$ instead of $v^2$ using the standard transformation
$v = v_+ - v_-$, with $v_+, v_-\geq 0$ from linear programming, giving $\abs v = v_+ + v_-$. 
Numerical experiments showed better overall performance because $v^2$ has small
derivatives near the optimium, slowing down progress. The additional variables 
$v^{k,\status}_+$, $v^{k,\status}_-$ for all $k$ and $\status$ increase computation time for 
derivatives but reduced the number of iterations significantly. 
Due to the nonlinearity of the problem, the solver sometimes got stuck in an
non-reachable workpiece frame.
\begin{figure}
\hfill
\includegraphics[width=0.49\textwidth]{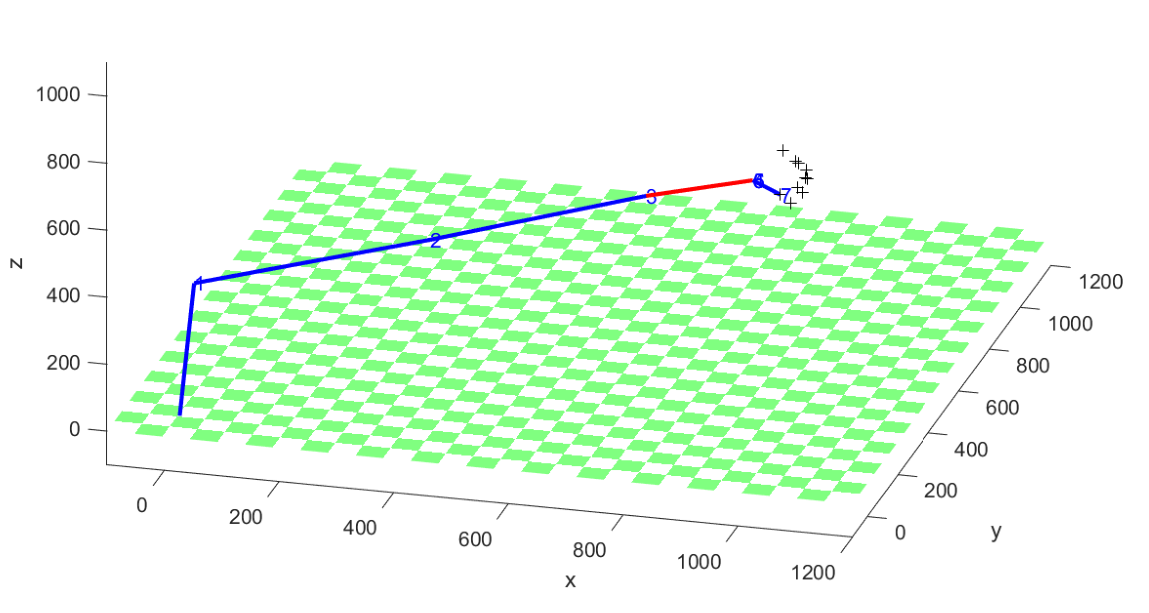}
\hfill 
\includegraphics[width=0.49\textwidth]{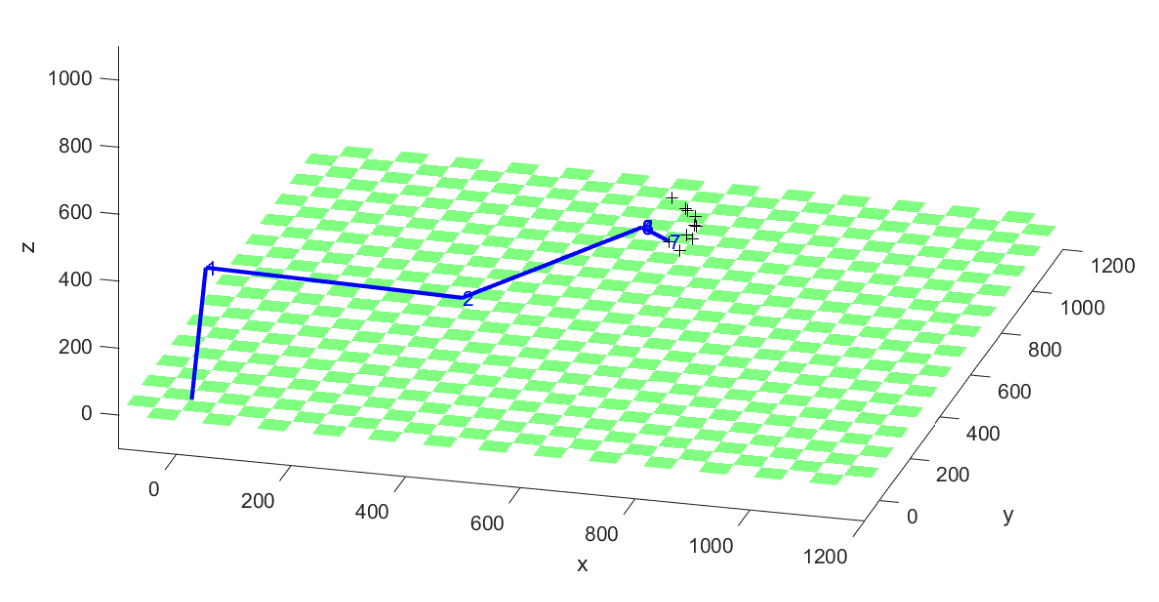}
\hfill
  \caption{Initial and optimized position}
  \label{fig:Optimization}
\end{figure}


\section{Conclusion and Outlook}
We have presented a general purpose algorithms that places a workpiece in the reachable workspace of a 
6R industrial robot, capaple of selecting configurations for all process points. 
A natural next step is a time optimal path 
through one of the possible configurations of each process point in 
a travelling salesman approach \cite{Laporte.1983}.
We found no academic software that
offers configuration programming at all. In the author's opinion, this  would be a very useful functionality.


\bibliographystyle{spmpsci}
\bibliography{Literature}

\end{document}